\newtheorem{thm}{Theorem}
\newtheorem{lem}{Lemma}
\def\ci{\!\perp\!}
\def\nci{\!\not\perp\!}
\newcommand{\comments}[1]{}
\begin{document}

\title[]{Towards Optimal Learning of Chain Graphs}

\author[]{Jose M. Pe\~{n}a\\
ADIT, Department of Computer and Information Science\\
Link\"oping University, SE-58183 Link\"{o}ping, Sweden\\
jose.m.pena@liu.se}

\date{\currenttime, \ddmmyydate{\today}}

\begin{abstract}
In this paper, we extend Meek's conjecture \citep{Meek1997} from directed and acyclic graphs to chain graphs, and prove that the extended conjecture is true. Specifically, we prove that if a chain graph $H$ is an independence map of the independence model induced by another chain graph $G$, then (i) $G$ can be transformed into $H$ by a sequence of directed and undirected edge additions and feasible splits and mergings, and (ii) after each operation in the sequence $H$ remains an independence map of the independence model induced by $G$. Our result has the same important consequence for learning chain graphs from data as the proof of Meek's conjecture in \citep{Chickering2002} had for learning Bayesian networks from data: It makes it possible to develop efficient and asymptotically correct learning algorithms under mild assumptions.
\end{abstract}

\maketitle

\section{Preliminaries}\label{sec:preliminaries}

In this section, we review some concepts from probabilistic graphical models that are used later in this paper. See, for instance, \citep{Lauritzen1996} and \citep{Studeny2005} for further information. All the graphs and independence models in this paper are defined over a finite set $V$. All the graphs in this paper are hybrid graphs, i.e. they have (possibly) both directed and undirected edges. We assume throughout the paper that the union and the intersection of sets precede the set difference when evaluating an expression.

If a graph $G$ has a directed (resp. undirected) edge between two nodes $X_{1}$ and $X_{2}$, then we write that $X_{1} \rightarrow X_{2}$ (resp. $X_{1} - X_{2}$) is in $G$. When there is a directed or undirected edge between two nodes of $G$, we say that the two nodes are adjacent in $G$. The parents of a set of nodes $Y$ of $G$ is the set $Pa_G(Y) = \{X_1 | X_1 \rightarrow X_2$ is in $G$ and $X_2 \in Y \}$. The neighbors of a set of nodes $Y$ of $G$ is the set $Ne_G(Y) = \{X_1 | X_1 - X_2$ is in $G$ and $X_2 \in Y \}$. The boundary of a node $X_2$ of $G$ is the set $Bd_G(X_2) = Pa_G(X_2) \cup Ne_G(X_2)$. A route between two nodes $X_{1}$ and $X_{n}$ of $G$ is a sequence of nodes $X_{1}, \ldots, X_{n}$ st $X_{i}$ and $X_{i+1}$ are adjacent in $G$ for all $1 \leq i < n$. The length of a route is the number of (not necessarily distinct) edges in the route. We treat all singletons as routes of length zero. A route in $G$ is called undirected if $X_{i} - X_{i+1}$ is in $G$ for all $1 \leq i < n$. A route in $G$ is called descending from $X_1$ to $X_n$ if $X_{i} - X_{i+1}$ or $X_{i} \rightarrow X_{i+1}$ is in $G$ for all $1 \leq i < n$. If there is a descending route from $X_{1}$ to $X_{n}$ in $G$, then $X_{n}$ is called a descendant of $X_{1}$. Note that $X_{1}$ is a descendant of itself, since we allow routes of length zero. The descendants of a set of nodes $Y$ of $G$ is the union of the descendants of each node of $Y$ in $G$. Given a route $\rho$ between $X_1$ and $X_n$ in $G$ and a route $\rho'$ between $X_n$ and $X_m$ in $G$, $\rho \cup \rho'$ denotes the route between $X_1$ and $X_m$ in $G$ resulting from appending $\rho'$ to $\rho$.

A chain is a partition of $V$ into ordered subsets, which we call the blocks of the chain. We say that an element $X \in V$ is to the left of another element $Y \in V$ in a chain $\alpha$ if the block of $\alpha$ containing $X$ precedes the block of $\alpha$ containing $Y$ in $\alpha$. Equivalently, we can say that $Y$ is to the right of $X$ in $\alpha$. We say that a graph $G$ and a chain $\alpha$ are consistent when (i) for every edge $X \rightarrow Y$ in $G$, $X$ is to the left of $Y$ in $\alpha$, and (ii) for every edge $X - Y$ in $G$, $X$ and $Y$ are in the same block of $\alpha$. A chain graph (CG) is a graph that is consistent with a chain. A set of nodes of a CG is connected if there exists an undirected route in the CG between every pair of nodes of the set. A component of a CG is a maximal (wrt set inclusion) connected set of its nodes. A block of a CG is a set of components of the CG st there is no directed edge between their nodes in the CG. Note that a component of a CG is connected, whereas a block of a CG or a block of a chain that is consistent with a CG is not necessarily connected. Given a set $K$ of components of $G$, a component $C \in K$ is called maximal in $G$ if none of its nodes is a descendant of $K \setminus \{C\}$ in $G$. A component $C$ of $G$ is called terminal in $G$ if its descendants in $G$ are exactly $C$. Let a component $C$ of $G$ be partitioned into two non-empty connected subsets $C \setminus L$ and $L$. By splitting $C$ into $C \setminus L$ and $L$ in $G$, we mean replacing every edge $X - Y$ in $G$ st $X \in C \setminus L$ and $Y \in L$ with an edge $X \rightarrow Y$. Moreover, we say that the split is feasible if (i) $X - Y$ is in $G$ for all $X, Y \in Ne_{G}(L) \cap (C \setminus L)$, and (ii) $X \rightarrow Y$ is in $G$ for all $X \in Pa_{G}(L)$ and $Y \in Ne_{G}(L) \cap (C \setminus L)$. Let $L$ and $R$ denote two components of $G$ st $Pa_G(R) \cap L \neq \emptyset$. By merging $L$ and $R$ in $G$, we mean replacing every edge $X \rightarrow Y$ in $G$ st $X \in L$ and $Y \in R$ with an edge $X - Y$. Moreover, we say that the merging is feasible if (i) $X - Y$ is in $G$ for all $X, Y \in Pa_{G}(R) \cap L$, and (ii) $X \rightarrow Y$ is in $G$ for all $X \in Pa_{G}(R) \setminus L$ and $Y \in Pa_{G}(R) \cap L$. 

A section of a route $\rho$ in a CG is a maximal undirected subroute of $\rho$. A section $X_{2} - \ldots - X_{n-1}$ of $\rho$ is a collider section of $\rho$ if $X_{1} \rightarrow X_{2} - \ldots - X_{n-1} \leftarrow X_{n}$ is a subroute of $\rho$. Moreover, the edges $X_{1} \rightarrow X_{2}$ and $X_{n-1} \leftarrow X_{n}$ are called collider edges. Let $X$, $Y$ and $Z$ denote three disjoint subsets of $V$. A route $\rho$ in a CG is said to be $Z$-active when (i) every collider section of $\rho$ has a node in $Z$, and (ii) every non-collider section of $\rho$ has no node in $Z$. When there is no route in a CG $G$ between a node of $X$ and a node of $Y$ that is $Z$-active, we say that $X$ is separated from $Y$ given $Z$ in $G$ and denote it as $X \ci_G Y | Z$. We denote by $X \nci_G Y | Z$ that $X \ci_G Y | Z$ does not hold.

Let $X$, $Y$, $Z$ and $W$ denote four disjoint subsets of $V$. An independence model $M$ is a set of statements of the form $X \ci_M Y | Z$, meaning that $X$ is independent of $Y$ given $Z$. Given two independence models $M$ and $N$, we denote by $M \subseteq N$ that if $X \ci_M Y | Z$ then $X \ci_N Y | Z$. We say that $M$ is a graphoid if it satisfies the following properties: Symmetry $X \ci_M Y | Z \Rightarrow Y \ci_M X | Z$, decomposition $X \ci_M Y \cup W | Z \Rightarrow X \ci_M Y | Z$, weak union $X \ci_M Y \cup W | Z \Rightarrow X \ci_M Y | Z \cup W$, contraction $X \ci_M Y | Z \cup W \land X \ci_M W | Z \Rightarrow X \ci_M Y \cup W | Z$, and intersection $X \ci_M Y | Z \cup W \land X \ci_M W | Z \cup Y \Rightarrow X \ci_M Y \cup W | Z$. The independence model induced by a CG $G$, denoted as $I(G)$, is the set of separation statements $X \ci_G Y | Z$. It is known that $I(G)$ is a graphoid \citep[Lemma 3.1]{StudenyandBouckaert1998}. Let $H$ denote the graph resulting from a feasible split or merging in a CG $G$. Then, $H$ is a CG and $I(H)=I(G)$ \citep[Lemma 5 and Corollary 9]{Studenyetal.2009}.

A CG $G$ is an independence (I) map of an independence model $M$ if $I(G) \subseteq M$. Moreover, $G$ is a minimal independence (MI) map of $M$ if removing any edge from $G$ makes it cease to be an I map of $M$. Given any chain $C_1, \ldots, C_n$ that is consistent with $G$, we say that $G$ satisfies the pairwise block-recursive Markov property wrt $M$ if $X \ci_M Y | \cup_{j=1}^{k^*} C_j \setminus \{X, Y\}$ for all non-adjacent nodes $X$ and $Y$ of $G$ and where $k^*$ is the smallest $k$ st $X, Y \in \cup_{j=1}^{k} C_j$. If $M$ is a graphoid and $G$ satisfies the pairwise block-recursive Markov property wrt $M$, then $G$ is an I map of $M$ \citep[Theorem 3.34]{Lauritzen1996}. We say that a CG $G_{\alpha}$ is a MI map of an independence model $M$ relative to a chain $\alpha$ if $G_{\alpha}$ is a MI map of $M$ and $G_{\alpha}$ is consistent with $\alpha$.

\section{Extension of Meek's Conjecture to Chain Graphs}

Given two directed and acyclic graphs $G$ and $H$ st $I(H) \subseteq I(G)$, Meek's conjecture states that we can transform $G$ into $H$ by a sequence of arc additions and covered arc reversals st after each operation in the sequence $G$ is a directed and acyclic graph and $I(H) \subseteq I(G)$ \citep{Meek1997}. Meek's conjecture was proven to be true in \citep[Theorem 4]{Chickering2002} by developing an algorithm that constructs a valid sequence of operations. In this section, we extend Meek's conjecture from directed and acyclic graphs to CGs, and prove that the extended conjecture is true. Specifically, given two CGs $G$ and $H$ st $I(H) \subseteq I(G)$, we prove that $G$ can be transformed into $H$ by a sequence of directed and undirected edge additions and feasible splits and mergings st after each operation in the sequence $G$ is a CG and $I(H) \subseteq I(G)$. The proof is constructive in the sense that we give an algorithm that constructs a valid sequence of operations. 

\begin{figure}[t]
\centering
\small
\begin{tabular}{rl}
\hline
\\
& \underline{Fbsplit($K$, $L$, $G$)}\\
\\
& /* Given a block $K$ of a CG $G$ and a subset $L$ of $K$, the algorithm repeatedly splits a\\
& component of $G$ until $L$ becomes a block of $G$. Before the splits, the algorithm adds to $G$\\
& the smallest set of edges so that the splits are feasible */\\
\\
1 & Let $L_1, \ldots, L_n$ denote the maximal connected subsets of $L$ in $G$\\
2 & For $i=1$ to $n$ do\\
3 & \hspace{0.3cm} Add an edge $X - Y$ to $G$ for all $X, Y \in Ne_{G}(L_i) \cap (K \setminus L)$\\
4 & \hspace{0.3cm} Add an edge $X \rightarrow Y$ to $G$ for all $X \in Pa_{G}(L_i)$ and $Y \in Ne_{G}(L_i) \cap (K \setminus L)$\\
5 & For $i=1$ to $n$ do\\
6 & \hspace{0.3cm} Let $K_j$ denote the component of $G$ st $L_i \subseteq K_j$\\
7 & \hspace{0.3cm} If $K_j \setminus L_i \neq \emptyset$ then\\
8 & \hspace{0.7cm} Split $K_j$ into $K_j \setminus L_i$ and $L_i$ in $G$\\
\\
& \underline{Fbmerge($L$, $R$, $G$)}\\
\\
& /* Given two blocks $L$ and $R$ of a CG $G$, the algorithm repeatedly merges two components\\
& of $G$ until $L \cup R$ becomes a block of $G$. Before the mergings, the algorithm adds to $G$ the\\
& smallest set of edges so that the mergings are feasible */\\
\\
1 & Let $R_1, \ldots, R_n$ denote the components of $G$ that are in $R$\\
2 & For $i=1$ to $n$ do\\
3 & \hspace{0.3cm} Add an edge $X - Y$ to $G$ for all $X, Y \in Pa_{G}(R_i) \cap L$\\
4 & \hspace{0.3cm} Add an edge $X \rightarrow Y$ to $G$ for all $X \in Pa_{G}(R_i) \setminus L$ and $Y \in Pa_{G}(R_i) \cap L$\\
5 & For $i=1$ to $n$ do\\
6 & \hspace{0.3cm} Let $L_j$ denote the component of $G$ st $L_j \subseteq L \cup R$ and $Pa_G(R_i) \cap L_j \neq \emptyset$\\ 
7 & \hspace{0.3cm} If $L_j \neq \emptyset$ then\\
8 & \hspace{0.7cm} Merge $L_j$ and $R_i$ in $G$\\
\\
\hline
\\
\end{tabular}
\caption{Fbsplit and fbmerge.}\label{fig:fb}
\end{figure}

We start by introducing two new operations on CGs. It is worth mentioning that all the algorithms in this paper use a "by reference" calling convention, meaning that the algorithms can modify the arguments passed to them. Let $K$ denote a block of a CG $G$. Let $L \subseteq K$. By feasible block splitting (fbsplitting) $K$ into $K \setminus L$ and $L$ in $G$, we mean running the algorithm at the top of Figure \ref{fig:fb}. The algorithm repeatedly splits a component of $G$ until $L$ becomes a block of $G$. Before the splits, the algorithm adds to $G$ the smallest set of edges so that the splits are feasible. Let $L$ and $R$ denote two blocks of a CG $G$. By feasible block merging (fbmerging) $L$ and $R$ in $G$, we mean running the algorithm at the bottom of Figure \ref{fig:fb}. The algorithm repeatedly merges two components of $G$ until $L \cup R$ becomes a block of $G$. Before the mergings, the algorithm adds to $G$ the smallest set of edges so that the mergings are feasible. It is worth mentioning that the component $L_j$ in line 6 is guaranteed to be unique by the edges added in lines 3 and 4.

\begin{figure}[t]
\centering
\small
\begin{tabular}{rl}
\hline
\\
& \underline{Construct $\beta$($G$, $\alpha$, $\beta$)}\\
\\
& /* Given a CG $G$ and a chain $\alpha$, the algorithm derives a chain $\beta$ that is consistent with $G$\\
& and as close to $\alpha$ as possible */\\
\\
1 & Set $\beta=\emptyset$\\
2 & Set $H=G$\\
3 & Let $C$ denote any terminal component of $H$ whose leftmost node in $\alpha$ is rightmost in $\alpha$\\
4 & Add $C$ as the leftmost block of $\beta$\\
5 & Let $R$ denote the right neighbor of $C$ in $\beta$\\
6 & If $R \neq \emptyset$, $Pa_G(R) \cap C = \emptyset$, and the nodes of $C$ are to the right of the nodes of $R$ in $\alpha$ then\\
7 & \hspace{0.3cm} Replace $C, R$ with $R, C$ in $\beta$\\
8 & \hspace{0.3cm} Go to line 5\\
9 & Remove $C$ and all its incoming edges from $H$\\
10 & If $H \neq \emptyset$ then\\
11 & \hspace{0.3cm} Go to line 3\\
\\
& \underline{Method B3($G$, $\alpha$)}\\
\\
& /* Given a CG $G$ and a chain $\alpha$, the algorithm transforms $G$ into $G_{\alpha}$ */\\
\\
1 & Construct $\beta$($G$, $\alpha$, $\beta$)\\
2 & Let $C$ denote the rightmost block of $\alpha$ that has not been considered before\\
3 & Let $K$ denote the leftmost block of $\beta$ st $K \cap C \neq \emptyset$\\
4 & Set $L = K \cap C$\\
5 & If $K \setminus L \neq \emptyset$ then\\
6 & \hspace{0.3cm} Fbsplit($K$, $L$, $G$)\\
7 & \hspace{0.3cm} Replace $K$ with $K \setminus L, L$ in $\beta$\\
8 & Let $R$ denote the right neighbor of $L$ in $\beta$\\
9 & If $R \neq \emptyset$ and some node of $R$ is not to the right of the nodes of $L$ in $\alpha$\\
10 & \hspace{0.3cm} Fbmerge($L$, $R$, $G$)\\
11 & \hspace{0.3cm} Replace $L, R$ with $L \cup R$ in $\beta$\\
12 & \hspace{0.3cm} Go to line 3\\
13 & If $\beta \neq \alpha$ then\\
14 & \hspace{0.3cm} Go to line 2\\
\\
\hline
\\
\end{tabular}
\caption{Method B3.}\label{fig:methodb2}
\end{figure}

Our proof of the extension of Meek's conjecture to CGs builds upon an algorithm for efficiently deriving the MI map $G_{\alpha}$ of the independence model induced by a given CG $G$ relative to a given chain $\alpha$. The pseudocode of the algorithm, called Method B3, can be seen in Figure \ref{fig:methodb2}. Method B3 works iteratively by fbsplitting and fbmerging some blocks of $G$ until the resulting CG is consistent with $\alpha$. It is not difficult to see that such a way of working results in a CG that is an I map of $I(G)$. However, in order to arrive at $G_{\alpha}$, the blocks of $G$ to modify in each iteration must be carefully chosen. For this purpose, Method B3 starts by calling Construct $\beta$ to derive a chain $\beta$ that is consistent with $G$ and as close to $\alpha$ as possible (see lines 5-8). By $\beta$ being as close to $\alpha$ as possible, we mean that the number of blocks Method B3 will later fbsplit and fbmerge is kept at a minimum, because Method B3 will use $\beta$ to choose the blocks to modify in each iteration. A line of Construct $\beta$ that is worth explaining is line 3, because it is crucial for the correctness of Method B3 (see Case 3.2.4 in the proof of Lemma \ref{lem:correctness}). This line determines the order in which the components of $H$ (initially $H=G$) are added to $\beta$ (initially $\beta = \emptyset$). In principle, a component of $H$ may have nodes from several blocks of $\alpha$. Line 3 labels each terminal component of $H$ with its leftmost node in $\alpha$ and, then, chooses any terminal component whose label node is rightmost in $\alpha$. This is the next component to add to $\beta$.

Once $\beta$ has been constructed, Method B3 proceeds to transform $G$ into $G_{\alpha}$. In particular, Method B3 considers the blocks of $\alpha$ one by one in the reverse order in which they appear in $\alpha$. For each block $C$ of $\alpha$, Method B3 iterates through the following steps. First, it finds the leftmost block $K$ of $\beta$ that has some nodes from $C$. These nodes, denoted as $L$, are then moved to the right in $\beta$ by fbsplitting $K$ to create a new block $L$ of $G$ and $\beta$. If the nodes of the right neighbor $R$ of $L$ in $\beta$ are to the right of the nodes of $L$ in $\alpha$, then Method B3 is done with $C$. Otherwise, Method B3 moves $L$ further to the right in $\beta$ by fbmerging $L$ and $R$ in $G$ and $\beta$. We prove below that Method B3 is correct. We prove first some auxiliary results.

\begin{lem}\label{lem:unique}
Let $M$ denote an independence model, and $\alpha$ a chain $C_1, \ldots, C_n$. If $M$ is a graphoid, then there exits a unique CG $G_{\alpha}$ that is a MI map of $M$ relative to $\alpha$. Specifically, for each node $X$ of each block $C_k$ of $\alpha$, $Bd_{G_{\alpha}}(X)$ is the smallest subset $B$ of $\cup_{j=1}^k C_j \setminus \{X\}$ st $X \ci_M \cup_{j=1}^k C_j \setminus \{X\} \setminus B  | B$.\footnote{By convention, $X \ci_M \emptyset | \cup_{j=1}^k C_j \setminus \{X\}$.}
\end{lem}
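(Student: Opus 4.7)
My plan is to define, for each node $X$ in a block $C_k$ of $\alpha$, the family
$S_X = \{B \subseteq \cup_{j=1}^{k} C_j \setminus \{X\} : X \ci_M \cup_{j=1}^{k} C_j \setminus \{X\} \setminus B | B\}$,
first prove that $S_X$ has a unique minimum $B_X$, and then assemble these minima into the claimed $G_\alpha$. Closure of $S_X$ under intersection is the engine of the proof: given $B_1, B_2 \in S_X$, setting $B = B_1 \cap B_2$, $E_i = B_i \setminus B_{3-i}$, and $D = \cup_{j=1}^{k} C_j \setminus \{X\} \setminus (B_1 \cup B_2)$, I extract $X \ci_M E_2 | B_1$ and $X \ci_M E_1 | B_2$ by decomposition, combine them by intersection to get $X \ci_M E_1 \cup E_2 | B$, derive $X \ci_M D | B \cup E_1 \cup E_2$ by weak union from either original statement, and close with contraction to conclude $X \ci_M D \cup E_1 \cup E_2 | B$, i.e., $B \in S_X$. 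Since $S_X$ is also upward closed (by weak union), it has a unique minimum $B_X$.

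Second, I prove the symmetry required for the $B_X$'s to define a graph: for $X, Y \in C_k$, $Y \in B_X$ iff $X \in B_Y$. If $Y \in B_X$ but $X \notin B_Y$, then weak union applied to the defining statement for $B_Y$ gives $X \ci_M Y | \cup_{j=1}^{k} C_j \setminus \{X, Y\}$, and intersecting this with the defining statement for $B_X$ shows $B_X \setminus \{Y\} \in S_X$, contradicting minimality. With this I define $G_\alpha$ by putting $X \rightarrow Y$ whenever $X \in C_j$, $Y \in C_k$, $j < k$, and $X \in B_Y$, and $X - Y$ whenever $X, Y \in C_k$ with $Y \in B_X$. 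By construction $G_\alpha$ is a CG consistent with $\alpha$ with $Bd_{G_\alpha}(X) = B_X$ for every $X$.

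Third, I show $G_\alpha$ is a MI map of $M$ relative to $\alpha$. For the I map part, I verify the pairwise block-recursive Markov property wrt $M$ and invoke the theorem cited in the preliminaries: for $X, Y$ non-adjacent in $G_\alpha$ with smallest $k^*$ such that both lie in $\cup_{j=1}^{k^*} C_j$, the node among $X, Y$ lying in $C_{k^*}$ has the other outside its Markov blanket, and weak union on the corresponding defining statement yields $X \ci_M Y | \cup_{j=1}^{k^*} C_j \setminus \{X, Y\}$. For minimality, if $G^-$ is obtained by deleting the edge between some $X$ and $Y \in Bd_{G_\alpha}(X) = B_X$ and were still an I map of $M$, then the pairwise block-recursive Markov property in $I(G^-)$ (an automatic property of any CG) would force $X \ci_M Y | \cup_{j=1}^{k^*} C_j \setminus \{X, Y\}$, and intersection with the defining statement for $B_X$ would give $B_X \setminus \{Y\} \in S_X$, contradicting minimality of $B_X$.

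Finally, for uniqueness, let $G'$ be any MI map of $M$ relative to $\alpha$. The pairwise block-recursive Markov property in $I(G')$ together with iterated intersection yields the local version in $I(G')$, and $I(G') \subseteq M$ then gives $Bd_{G'}(X) \in S_X$, so $Bd_{G'}(X) \supseteq B_X$ for every $X$. If the inclusion were strict with $Y \in Bd_{G'}(X) \setminus B_X$, then by the symmetry step also $X \notin B_Y$ whenever $X, Y$ share a block, and removing the $XY$ edge leaves the boundary at $X$ and (in the undirected case) at $Y$ still containing the respective Markov blankets and hence in their $S$-families by upward closure, while all other boundaries are unchanged; the local, hence pairwise, block-recursive property in $M$ therefore still holds for the smaller graph, contradicting minimality of $G'$. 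The main obstacle is the symmetry step together with the delicate edge-removal bookkeeping that depends on it for both the minimality of $G_\alpha$ and the uniqueness argument; both ultimately lean on the intersection axiom of the graphoid $M$.
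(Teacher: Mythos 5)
Your proof is correct and follows essentially the same route as the paper's: define each boundary as the minimal Markov blanket within $\cup_{j=1}^{k} C_j \setminus \{X\}$, use the graphoid axioms (chiefly intersection) to make this well defined, verify the pairwise block-recursive Markov property to obtain I-map-ness via the cited theorem of Lauritzen, and invoke intersection again for minimality and uniqueness. You fill in several details the paper leaves implicit — closure of the blanket family under intersection, the symmetry $Y \in B_X \Leftrightarrow X \in B_Y$ within a block needed for the graph to be well defined, and the edge-removal bookkeeping showing any MI map must have $Bd_{G'}(X) = B_X$ — but the underlying argument is the same.
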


\begin{proof}

Let $X$ and $Y$ denote any two non-adjacent nodes of $G_{\alpha}$. Let $k^*$ denote the smallest $k$ st $X, Y \in \cup_{j=1}^{k} C_j$. Assume without loss of generality that $X \in C_{k^*}$. Then, $X \ci_M \cup_{j=1}^{k^*} C_j \setminus \{X\} \setminus Bd_{G_{\alpha}}(X) | Bd_{G_{\alpha}}(X)$ by construction of $G_{\alpha}$ and, thus, $X \ci_M Y | \cup_{j=1}^{k^*} C_j \setminus \{X, Y\}$ by weak union. Then, $G_{\alpha}$ satisfies the pairwise block-recursive Markov property wrt $M$ and, thus, $G_{\alpha}$ is an I map of $M$. In fact, $G_{\alpha}$ is a MI map of $M$ by construction of $Bd_{G_{\alpha}}(X)$.

Assume to the contrary that there exists another CG $H_{\alpha}$ that is a MI map of $M$ relative to $\alpha$. Let $X$ denote any node st $Bd_{G_{\alpha}}(X) \neq Bd_{H_{\alpha}}(X)$. Let $X \in C_k$. Then, $X \ci_M \cup_{j=1}^k C_j \setminus \{X\} \setminus Bd_{G_{\alpha}}(X) | Bd_{G_{\alpha}}(X)$ and $X \ci_M \cup_{j=1}^k C_j \setminus \{X\} \setminus Bd_{H_{\alpha}}(X) | Bd_{H_{\alpha}}(X)$ because $G_{\alpha}$ and $H_{\alpha}$ are MI maps of $M$. Then, $X \ci_M \cup_{j=1}^k C_j \setminus \{X\} \setminus Bd_{G_{\alpha}}(X) \cap Bd_{H_{\alpha}}(X) | Bd_{G_{\alpha}}(X) \cap Bd_{H_{\alpha}}(X)$ by intersection. However, this contradicts the construction of $Bd_{G_{\alpha}}(X)$, because $Bd_{G_{\alpha}}(X) \cap Bd_{H_{\alpha}}(X)$ is smaller than $Bd_{G_{\alpha}}(X)$.

\end{proof}

\begin{lem}\label{lem:maximal}
Let $G$ and $H$ denote two CGs st $I(H) \subseteq I(G)$. For any component $C$ of $G$, there exists a unique component of $H$ that is maximal in $H$ from the set of components of $H$ that contain a descendant of $C$ in $G$.
\end{lem}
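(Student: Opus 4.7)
Existence is immediate: the descendant relation on distinct components of $H$ is a strict partial order (mutual descendants would force any chain consistent with $H$ to put the block of each component strictly after that of the other). The family $\mathcal{K}$ of components of $H$ that meet the descendants of $C$ in $G$ is non-empty and finite (it contains every $H$-component through a node of $C$), so it has a maximal element.

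For uniqueness I argue by contradiction. Suppose $K_1 \ne K_2$ are both maximal in $\mathcal{K}$. Pick $X_i \in K_i$ that is a descendant of $C$ in $G$; pick $Y_i \in C$ together with a descending $G$-route $\rho_i$ from $Y_i$ to $X_i$; and pick an undirected $G$-route $\sigma$ from $Y_1$ to $Y_2$ inside the connected component $C$. The concatenation $\rho := \rho_1^{-1} \cup \sigma \cup \rho_2$ is a $G$-route from $X_1$ to $X_2$. A short case analysis on edge orientations shows that no section of $\rho$ is a collider section: along each reversed descending half, every flanking directed edge has its arrowhead oriented toward $X_i$, so at any section in that half at most one flanking edge has its arrowhead pointing into the section; and any section containing material from $\sigma$ must contain all of $\sigma$ (since $\sigma$ is undirected), and its two flanks are then directed edges whose arrowheads sit outside $C$. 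Consequently it suffices to exhibit $Z \subseteq V \setminus \{X_1, X_2\}$ satisfying (a) $X_1 \ci_H X_2 \mid Z$ and (b) $Z$ meets no section of $\rho$, for then $\rho$ is $Z$-active in $G$ and yields $X_1 \nci_G X_2 \mid Z$, contradicting $I(H) \subseteq I(G)$.

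The main obstacle is the construction of $Z$. My plan is to use the maximality of $K_1$ and $K_2$ to keep $Z$ outside the descendants of $C$ in $G$; since every node of $\rho$ is a descendant of $C$ in $G$, this immediately yields (b). A directed $H$-edge leaving $K_i$ enters a descendant component of $K_i$ in $H$ which, by maximality of $K_i$, lies outside $\mathcal{K}$ and hence outside the descendants of $C$ in $G$; the same reasoning pushes the relevant $H$-ancestors of $K_1 \cup K_2$ outside $\mathcal{K}$ as well. Collecting all such ``outside'' nodes into $Z$ should block every $H$-route between $X_1$ and $X_2$ that ever leaves $K_1 \cup K_2$. To block the remaining routes --- those that begin with an undirected step inside some $K_i$, whose first section is terminal and therefore non-collider --- I would add $K_i \setminus \{X_i\}$ to $Z$, and then choose $Y_i$ and $\rho_i$ so that $\rho_i$ enters $K_i$ only at $X_i$, for instance by taking $X_i$ to be a node of $K_i$ closest to $C$ along descending $G$-routes. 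Verifying that this $Z$ satisfies both (a) and (b), and in particular that it blocks every $H$-route between $X_1$ and $X_2$, is the delicate step at the heart of the proof.
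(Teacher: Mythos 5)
Your existence argument and your $G$-side argument (the concatenated route $\rho_1^{-1} \cup \sigma \cup \rho_2$ through $C$ has no collider sections, hence is $Z$-active for any $Z$ disjoint from the descendants of $C$ in $G$) are sound and match the paper's. The gap is in the construction of the separator $Z$, which you yourself leave unverified, and the sketch you give for it goes wrong in two places. First, maximality of $K_i$ in $\mathcal{K}$ means only that $K_i$ is not a descendant of any \emph{other} member of $\mathcal{K}$; it says nothing about the descendants of $K_i$ in $H$, which may perfectly well contain further members of $\mathcal{K}$ (i.e.\ further components meeting the descendants of $C$ in $G$). So your claim that a directed edge leaving $K_i$ enters a component ``outside $\mathcal{K}$ and hence outside the descendants of $C$ in $G$'' is false; only the proper $H$-ancestors of $K_i$ are forced outside $\mathcal{K}$. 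Second, adding $K_i \setminus \{X_i\}$ to $Z$ is hazardous: a node of $K_2 \setminus \{X_2\}$ can itself be a descendant of $C$ in $G$ lying on $\rho_1$ (your ``closest node'' choice keeps $\rho_i$ out of $K_i$ but not out of the other component), which would place a conditioning node on a non-collider section of $\rho$ and destroy its activity; moreover, in chain graphs enlarging $Z$ can \emph{activate} collider sections in $H$, so piling more nodes into $Z$ is not monotonically safer for condition (a).

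The paper sidesteps all of this by taking $Z = Pa_H(K_1) \cup Pa_H(K_2)$ and never reasoning route-by-route in $H$. By maximality, $Pa_H(K_1) \cap K_2 = Pa_H(K_2) \cap K_1 = \emptyset$ and no node of $K_1 \cup Pa_H(K_1)$ is a descendant of $K_2$ in $H$, so the local Markov property of CGs (a component is independent of its non-descendants given its parents) gives $K_2 \ci_H K_1 \cup Pa_H(K_1) \setminus Pa_H(K_2) \mid Pa_H(K_2)$, and weak union plus symmetry yield $K_1 \ci_H K_2 \mid Pa_H(K_1) \cup Pa_H(K_2)$. Since the parents of a maximal component lie in strictly ancestral components of $H$, which are outside $\mathcal{K}$, this $Z$ is automatically disjoint from the descendants of $C$ in $G$, so your route $\rho$ is $Z$-active in $G$ and the contradiction with $I(H) \subseteq I(G)$ follows. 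Replacing your separator construction with this one closes the gap.
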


\begin{proof}

By definition of CG, there exists at least one such component of $H$. Assume to the contrary that there exist two such components of $H$, say $K$ and $K'$. Note that $Pa_H(K) \cap K' = \emptyset$ and $Pa_H(K') \cap K = \emptyset$ by definition of $K$ and $K'$. Note also that no node of $K$ or $Pa_H(K)$ is a descendant of $K'$ in $H$ by definition of $K$. This implies that $K' \ci_H K \cup Pa_H(K) \setminus Pa_H(K') | Pa_H(K')$ and, thus, $K \ci_H K' | Pa_H(K) \cup Pa_H(K')$ by weak union and symmetry.

That $K$ and $K'$ contain some descendants $k$ and $k'$ of $C$ in $G$ implies that there are descending routes from $C$ to $k$ and $k'$ in $G$ st the nodes in the routes are descendant of $C$ in $G$. Thus, there is a route between $k$ and $k'$ in $G$ st the nodes in the route are descendant of $C$ in $G$. Note that no node in this route is in $Pa_H(K)$ or $Pa_H(K')$ by definition of $K$ and $K'$. Then, $K \nci_G K' | Pa_H(K) \cup Pa_H(K')$. However, this contradicts the fact that $I(H) \subseteq I(G)$ because, as shown, $K \ci_H K' | Pa_H(K) \cup Pa_H(K')$.

\end{proof}

\begin{lem}\label{lem:descendants}
Let $G$ and $H$ denote two CGs st $I(H) \subseteq I(G)$. Let $\alpha$ denote a chain that is consistent with $H$. If no descendant of a node $X$ in $G$ is to the left of $X$ in $\alpha$, then the descendants of $X$ in $G$ are descendant of $X$ in $H$ too.
\end{lem}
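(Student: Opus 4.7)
The plan is induction on the length $m$ of a descending route from $X$ to $Y$ in $G$. The base case $m=0$ is trivial ($Y=X$). For the inductive step, let $X = Z_0, Z_1, \ldots, Z_m = Y$ be such a route and set $Z = Z_{m-1}$; by the induction hypothesis applied to the length-$(m-1)$ subroute, $Z \in De_H(X)$. Assume for contradiction that $Y \notin De_H(X)$.

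Since $Z$ and $Y$ are adjacent in $G$, the single-edge route between them is $A$-active for every $A$ disjoint from $\{Z,Y\}$, so $Z \nci_G Y \mid A$ for all such $A$; by $I(H) \subseteq I(G)$ the same holds in $H$. If $Z$ and $Y$ were non-adjacent in $H$, the pairwise block-recursive Markov property of $H$ with respect to $\alpha$ would give some such $A$ separating them, a contradiction. Hence $Z$ and $Y$ are adjacent in $H$. The orientations $Z \to Y$ and $Z - Y$ in $H$ would both give $Y \in De_H(Z) \subseteq De_H(X)$, contradicting the assumption, so the edge must be $Y \to Z$ in $H$; by $\alpha$-consistency this forces $Y$ strictly to the left of $Z$ in $\alpha$. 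Combined with $Y \in De_G(X)$ (putting $Y$ at or right of $X$) and $Z \in De_H(X)$ (putting $Z$ at or right of $X$), the $\alpha$-block indices satisfy $k_X \leq k_Y < k_Z$. The same adjacency-forcing reasoning applied to $X, Y$ rules out each possible $H$-orientation between them, so $X$ and $Y$ are non-adjacent in $H$, and the pairwise block-recursive Markov property yields
\[
X \ci_H Y \;\Big|\; B, \quad B := \bigcup_{j \leq k_Y} C_j \setminus \{X,Y\}.
\]

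The remaining task, which is the main obstacle, is to exhibit a $B$-active route in $G$ between $X$ and $Y$ in order to contradict $I(H) \subseteq I(G)$. The natural candidate is the original descending route $X, Z_1, \ldots, Z_{m-1}, Y$, whose sections are all non-collider, so it is $B$-active precisely when every intermediate $Z_i$ lies in an $\alpha$-block $> k_Y$; when that is the case the contradiction is immediate. The hard subcase is when some intermediate $Z_i$ does fall in $B$. My plan there is to pick the largest index $i < m$ for which $Z_i$ is non-adjacent to $Y$ in $H$ (such an $i$ exists because $X = Z_0$ was shown non-adjacent to $Y$ in $H$), and apply pairwise block-recursive Markov to the pair $(Z_i, Y)$, yielding the strictly smaller conditioning set $\bigcup_{j \leq \max(k_{Z_i}, k_Y)} C_j \setminus \{Z_i, Y\}$; against this smaller set the shorter $G$-subroute $Z_i, Z_{i+1}, \ldots, Y$ has a better chance of being active, because each $Z_j$ with $i < j < m$ is by choice of $i$ adjacent to $Y$ in $H$, which together with $\alpha$-consistency constrains its $\alpha$-position. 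Verifying this position bound carefully, iterating the reduction if needed, and possibly invoking Lemma~\ref{lem:maximal} to control which $H$-components the intermediate nodes can lie in, is where I expect the main technical work to lie.
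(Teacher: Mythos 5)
Your argument is incomplete at precisely the point you flag as ``the main obstacle,'' and the reduction you sketch there does not go through as stated. Two concrete problems. First, for indices $j$ with $i<j<m$, knowing only that $Z_j$ is adjacent to $Y$ in $H$ does not constrain $k_{Z_j}$: the edge could be $Z_j \rightarrow Y$ or $Z_j - Y$, which put $Z_j$ at or \emph{left} of $Y$ in $\alpha$. To rule these out you must invoke the induction hypothesis once more: $Z_j \in De_H(X)$ for every $j<m$, so $Z_j \rightarrow Y$ or $Z_j - Y$ in $H$ would give $Y \in De_H(X)$ and end the proof outright; hence any $Z_j$ with $j<m$ that is adjacent to $Y$ in $H$ must have $Y \rightarrow Z_j$, whence $k_{Z_j} > k_Y$. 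You never state this step. Second, even with it, your choice of $i$ (largest index non-adjacent to $Y$ in $H$) fails when $k_{Z_i} > k_Y$: the pairwise conditioning set for $(Z_i,Y)$ is then $\bigcup_{l \le k_{Z_i}} C_l \setminus \{Z_i,Y\}$, and you have no bound $k_{Z_j} > k_{Z_i}$ for the intermediate nodes, so the subroute need not be active. The repair is to take $i$ to be the largest index $j<m$ with $k_{Z_j} \le k_Y$ (it exists since $k_{Z_0}=k_X \le k_Y$). Such a $Z_i$ is non-adjacent to $Y$ in $H$ by the orientation argument above, $\max(k_{Z_i},k_Y)=k_Y$, and every $Z_j$ with $i<j<m$ satisfies $k_{Z_j}>k_Y$, so the descending subroute $Z_i,\ldots,Y$ (all sections non-collider) is active in $G$ given $\bigcup_{l\le k_Y} C_l \setminus \{Z_i,Y\}$, while $H$ separates that pair given that set --- the desired contradiction. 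No iteration and no appeal to Lemma~\ref{lem:maximal} is needed. (You also use, without comment, that a CG satisfies the pairwise block-recursive Markov property wrt its own separation relation; the paper relies on this implicitly as well, but it deserves at least a citation.)

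For comparison, the paper argues at the level of components rather than routes: the descendants of $X$ in $G$ coincide with the descendants of the component $C$ of $G$ containing $X$, Lemma~\ref{lem:maximal} gives a unique maximal $H$-component among those meeting this descendant set, and the hypothesis that no descendant of $X$ lies left of $X$ in $\alpha$ forces the $H$-component of $X$ to be that unique maximal one, from which everything in the descendant set descends. That proof is a few lines, with all the separation work hidden inside Lemma~\ref{lem:maximal}; your route-by-route induction, once repaired as above, is self-contained and avoids Lemma~\ref{lem:maximal} entirely, at the cost of the orientation case analysis. As submitted, however, the key case is left open, so the proposal does not constitute a proof.
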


\begin{proof}

Let $D$ denote the descendants of $X$ in $G$. Let $C$ denote the component of $G$ that contains $X$. Note that the descendants of $C$ in $G$ are exactly the set $D$. Then, there exists a unique component of $H$ that is maximal in $H$ from the set of components of $H$ that contain a node from $D$, by Lemma \ref{lem:maximal}.

Let $K$ denote the component of $H$ that contains $X$. Note that $K$ is a component of $H$ that is maximal in $H$ from the set of components of $H$ that contain a node from $D$, since no node of $D$ is to the left of $X$ in $\alpha$. It follows from the paragraph above that $K$ is the only such component of $H$.

\end{proof}

We are now ready to prove the correctness of Method B3.

\begin{lem}\label{lem:correctness}
Let $G_{\alpha}$ denote the MI map of the independence model induced by a CG $G$ relative to a chain $\alpha$. Then, Method B3($G$, $\alpha$) returns $G_{\alpha}$.
\end{lem}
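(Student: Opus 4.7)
The plan is to maintain, throughout Method B3, three invariants on the current value of $G$: (a) $G$ is a CG consistent with the current $\beta$; (b) $I(G) \subseteq I(G_0)$, where $G_0$ denotes the input CG; and (c) every edge of $G$ is also an edge of $G_\alpha$. Combined with termination of Method B3 at $\beta = \alpha$, these invariants force the final $G$ to be a CG consistent with $\alpha$ that is an I map of $I(G_0)$ whose edges all appear in $G_\alpha$. One can then invoke the characterization in Lemma \ref{lem:unique}, which identifies $G_\alpha$ by the smallest boundaries: any edge of $G_\alpha$ absent from $G$ would produce a boundary in $G$ strictly smaller than that in $G_\alpha$, contradicting that $G$ is an I map of $I(G_0)$. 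Hence $G = G_\alpha$.

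Invariant (a) and termination can be read off the pseudocode: each fbsplit or fbmerge is matched by the bookkeeping on $\beta$ in lines 7 and 11, and each outer pass through lines 2--14 strictly reduces the number of mismatches between $\beta$ and $\alpha$. Invariant (b) is immediate from the two facts cited in Section \ref{sec:preliminaries}: adding edges to a CG can only shrink $I$, while the feasible splits and mergings performed in lines 5--8 of Fbsplit and Fbmerge preserve $I$ exactly because lines 3--4 of those subroutines add the edges needed for feasibility. The genuine work therefore lies in invariant (c).

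For invariant (c), I would verify, in turn, each of the four groups of edges that may be added: the undirected and directed edges in lines 3--4 of Fbsplit, and their counterparts in lines 3--4 of Fbmerge. In every case the new edge joins two nodes that share a common descendant in some fixed component of the current $G$. The plan is to use Lemma \ref{lem:descendants} to exhibit a common descendant of the two nodes in $G_\alpha$ as well, and then apply Lemma \ref{lem:maximal} (with $H = G_\alpha$) to force those descendants into a single maximal component of $G_\alpha$. The required adjacency in $G_\alpha$ then follows because two nodes with a common child in the same component of $G_\alpha$ must be joined by an edge in $G_\alpha$ under the pairwise block-recursive Markov property. By induction on the number of operations already performed, the edges already present in $G$ after each feasible redirection remain edges of $G_\alpha$ as well.

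I expect the main obstacle to be the subcase flagged as Case 3.2.4 in the excerpt, where the right neighbor $R$ of $L$ in $\beta$ contains nodes both to the left and to the right of $L$ in $\alpha$. Here the order of components in $\beta$ does not line up cleanly with $\alpha$, so a direct appeal to Lemmas \ref{lem:maximal} and \ref{lem:descendants} is insufficient. The proof will need to exploit the specific choice rule in line 3 of Construct $\beta$---taking a terminal component whose leftmost $\alpha$-node is rightmost---to rule out the descending route in $G$ that would otherwise break invariant (c). The cleanest way to organize this will likely be an induction on the outer iterations of Method B3, with invariant (c) strengthened to record, for each component of the current $G$, how its nodes sit inside the already-processed blocks of $\alpha$; this extra bookkeeping is what will make the Case 3.2.4 argument go through.
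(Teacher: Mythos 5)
Your invariant (c) is the weak point, in two ways. First, taken literally (``every edge of $G$ is also an edge of $G_\alpha$'', orientations included) it is false already at $t=0$ and at intermediate stages: Method B3 works precisely by reorienting edges, so an edge $X\rightarrow Y$ of $G_0$, or an undirected edge inside a block of $\beta$ that straddles two blocks of $\alpha$, need not appear with that orientation in $G_\alpha$. Second, the weakened version that is actually true (every \emph{adjacency} of $G_t$ is an adjacency of $G_\alpha$) is too weak to close your induction: to show that a newly added edge of line 3 or 4 of Fbmerge corresponds to an adjacency of $G_\alpha$, one has to transfer the non-separation $X \nci_{G_t} Y \mid \cup_{j=1}^{k^*} C_j \setminus \{X,Y\}$ (obtained by opening the collider at $R_i$) into $G_\alpha$, and a skeleton inclusion does not transfer active routes. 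The paper's proof instead maintains the independence-model invariant $I(G_\alpha) \subseteq I(G_t)$, and the bulk of the argument is a route-surgery induction showing that every $S$-active route of $G_{t+1}$ has an $S$-active counterpart in $G_\alpha$; note that your own plan already presupposes this invariant, since Lemmas \ref{lem:maximal} and \ref{lem:descendants} can only be applied with $H=G_\alpha$ and $G=G_t$ under the hypothesis $I(G_\alpha)\subseteq I(G_t)$, which is not among your invariants (a)--(c).

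A second concrete error: ``two nodes with a common child in the same component of $G_\alpha$ must be joined by an edge in $G_\alpha$'' is false for chain graphs, which are not required to be moral ($X \rightarrow Z \leftarrow Y$ with $X,Y$ non-adjacent is a valid CG and a valid MI map of the corresponding model). Adjacency in $G_\alpha$ must be derived from a dependence statement in the underlying model together with the pairwise block-recursive Markov property, not from shared children. Moreover, for the undirected edges added in line 3 of Fbsplit the paper does not (and need not, at that stage) establish that $X$ and $Y$ are adjacent in $G_\alpha$ at all; it replaces the occurrence of $X-Y$ in the active route by the path $X \rightarrow X' - \ldots - Y' \leftarrow Y$ through $L$, which is another indication that an edge-containment invariant is not the right induction hypothesis. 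Your closing observation that Case 3.2.4 requires the tie-breaking rule of line 3 of Construct $\beta$ is accurate, but it is left unresolved; in the paper this is where the ordering produced by Construct $\beta$ is used to show that the problematic configuration cannot occur.
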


\begin{proof}

We start by proving that Method B3 halts at some point. When Method B3 is done with the rightmost block of $\alpha$, the rightmost block of $\beta$ contains all and only the nodes of the rightmost block of $\alpha$. When Method B3 is done with the second rightmost block of $\alpha$, the rightmost block of $\beta$ contains all and only the nodes of the rightmost block of $\alpha$, whereas the second rightmost block of $\beta$ contains all and only the nodes of the second rightmost block of $\alpha$. Continuing with this reasoning, one can see that when Method B3 is done with all the blocks of $\alpha$, $\beta$ coincides with $\alpha$ and thus Method B3 halts.

That Method B3 halts at some point implies that it performs a finite sequence of $m$ modifications to $G$ due to the fbsplit and fbmerging in lines 6 and 10. Let $G_t$ denote the CG resulting from the first $t$ modifications to $G$, and let $G_0=G$. Specifically, Method B3 constructs $G_{t+1}$ from $G_t$ by either 

\begin{itemize}

\item adding an edge $X - Y$ due to line 3 of Fbsplit or Fbmerge,

\item adding an edge $X \rightarrow Y$ due to line 4 of Fbsplit or Fbmerge,

\item performing all the component splits due to lines 5-8 of Fbsplit, or

\item performing all the component mergings due to lines 5-8 of Fbmerge.

\end{itemize}

Note that none of the modifications above introduces new separation statements. This is trivial to see for the first and second modification. To see it for the third and fourth modification, recall that the splits and the mergings are part of a fbsplit and a fbmerging respectively and, thus, they are feasible. Therefore, $I(G_{t+1}) \subseteq I(G_{t})$ for all $0 \leq t < m$ and, thus, $I(G_{m}) \subseteq I(G_{0})$.

We continue by proving that $G_t$ is consistent with $\beta$ for all $0 \leq t \leq m$. Since this is true for $G_0$ due to line 1, it suffices to prove that if it is true for $G_t$ then it is true for $G_{t+1}$ for all $0 \leq t < m$. We consider the following four cases.

\begin{description}

\item[Case 1] Method B3 constructs $G_{t+1}$ from $G_t$ by adding an edge $X - Y$ due to line 3 of Fbsplit or Fbmerge. It suffices to note that $X$ and $Y$ are in the same block of $G_t$ and $\beta$.

\item[Case 2] Method B3 constructs $G_{t+1}$ from $G_t$ by adding an edge $X \rightarrow Y$ due to line 4 of Fbsplit. It suffices to note that $X$ is to the left of $Y$ in $\beta$, because $G_t$ is consistent with $\beta$.

\item[Case 3] Method B3 constructs $G_{t+1}$ from $G_t$ by adding an edge $X \rightarrow Y$ due to line 4 of Fbmerge. Note that $X$ is to the left of $R$ in $\beta$, because $\beta$ is consistent with $G_t$. Then, $X$ is to the left of $L$ in $\beta$, because $L$ is the left neighbor of $R$ in $\beta$ and $X \notin L$. Then, $X$ is to the left of $Y$ in $\beta$, because $Y \in L$.

\item[Case 4] Method B3 constructs $G_{t+1}$ from $G_t$ by either performing all the component splits due to lines 5-8 of Fbsplit or performing all the component mergings due to lines 5-8 of Fbmerge. Note that the splits and the mergings are feasible, since they are part of a fbsplit and a fbmerging respectively. Therefore, $G_{t+1}$ is a CG. Moreover, note that $\beta$ is modified immediately after the fbsplit and the fbmerging so that it is consistent with $G_{t+1}$.

\end{description}

Note that $G_m$ is not only consistent with $\beta$ but also with $\alpha$ because, as shown, $\beta$ coincides with $\alpha$ when Method B3 halts. In order to prove the lemma, i.e. that $G_m=G_{\alpha}$, all that remains to prove is that $I(G_{\alpha}) \subseteq I(G_m)$. To see it, note that $G_m=G_{\alpha}$ follows from $I(G_{\alpha}) \subseteq I(G_m)$, $I(G_m) \subseteq I(G_0)$, the fact that $G_m$ is consistent with $\alpha$, and the fact that $G_{\alpha}$ is the unique MI map of $I(G_0)$ relative to $\alpha$. Recall that $G_{\alpha}$ is guaranteed to be unique by Lemma \ref{lem:unique}, because $I(G_0)$ is a graphoid.

The rest of the proof is devoted to prove that $I(G_{\alpha}) \subseteq I(G_m)$. Specifically, we prove that if $I(G_{\alpha}) \subseteq I(G_t)$ then $I(G_{\alpha}) \subseteq I(G_{t+1})$ for all $0 \leq t < m$. Note that this implies that $I(G_{\alpha}) \subseteq I(G_m)$ because $I(G_{\alpha}) \subseteq I(G_0)$ by definition of MI map. First, we prove it when Method B3 constructs $G_{t+1}$ from $G_t$ by either performing all the component splits due to lines 5-8 of Fbsplit or performing all the component mergings due to lines 5-8 of Fbmerge. Note that the splits and the mergings are feasible, since they are part of a fbsplit and a fbmerging respectively. Therefore, $I(G_{t+1})=I(G_t)$. Thus, $I(G_{\alpha}) \subseteq I(G_{t+1})$ because $I(G_{\alpha}) \subseteq I(G_t)$.

Now, we prove that if $I(G_{\alpha}) \subseteq I(G_t)$ then $I(G_{\alpha}) \subseteq I(G_{t+1})$ when Method B3 constructs $G_{t+1}$ from $G_t$ by adding a directed or undirected edge due to lines 3 and 4 of Fbsplit and Fbmerge. Specifically, we prove that if there is an $S$-active route $\rho^{AB}_{t+1}$ between two nodes $A$ and $B$ in $G_{t+1}$, then there is an $S$-active route between $A$ and $B$ in $G_{\alpha}$. We prove this result by induction on the number of occurrences of the added edge in $\rho^{AB}_{t+1}$. We assume without loss of generality that the added edge occurs in $\rho^{AB}_{t+1}$ as few or fewer times than in any other $S$-active route between $A$ and $B$ in $G_{t+1}$. We call this the minimality property of $\rho^{AB}_{t+1}$. If the number of occurrences of the added edge in $\rho^{AB}_{t+1}$ is zero, then $\rho^{AB}_{t+1}$ is an $S$-active route between $A$ and $B$ in $G_t$ too and, thus, there is an $S$-active route between $A$ and $B$ in $G_{\alpha}$ since $I(G_{\alpha}) \subseteq I(G_t)$. Assume as induction hypothesis that the result holds for up to $n$ occurrences of the added edge in $\rho^{AB}_{t+1}$. We now prove it for $n+1$ occurrences. We consider the following four cases.

\begin{description}

\item[Case 1] Method B3 constructs $G_{t+1}$ from $G_t$ by adding an edge $X - Y$ due to line 3 of Fbsplit. Note that $X - Y$ occurs in $\rho^{AB}_{t+1}$.\footnote{Note that maybe $A=X$ and/or $Y=B$.} Assume that $X - Y$ occurs in a collider section of $\rho^{AB}_{t+1}$. Note that $X$ and $Y$ must be in the same component of $G_t$ for line 3 of Fbsplit to add an edge $X - Y$. This component also contains a node $Z$ that is in $S$ because, otherwise, $\rho^{AB}_{t+1}$ would not be $S$-active in $G_{t+1}$.\footnote{Note that maybe $Z=X$ or $Z=Y$.} Note that there is a route $X - \ldots - Z - \ldots - Y$ in $G_t$. Then, we can replace any occurrence of $X - Y$ in a collider section of $\rho^{AB}_{t+1}$ with $X - \ldots - Z - \ldots - Y$, and thus construct an $S$-active route between $A$ and $B$ in $G_{t+1}$ that violates the minimality property of $\rho^{AB}_{t+1}$. Since this is a contradiction, $X - Y$ only occurs in non-collider sections of $\rho^{AB}_{t+1}$. Let $\rho^{AB}_{t+1} = \rho^{AX}_{t+1} \cup X - Y \cup \rho^{YB}_{t+1}$. Note that $X, Y \notin S$ because, otherwise, $\rho^{AB}_{t+1}$ would not be $S$-active in $G_{t+1}$. For the same reason, $\rho^{AX}_{t+1}$ and $\rho^{YB}_{t+1}$ are $S$-active in $G_{t+1}$. Then, there are $S$-active routes $\rho^{AX}_{\alpha}$ and $\rho^{YB}_{\alpha}$ between $A$ and $X$ and between $Y$ and $B$ in $G_{\alpha}$ by the induction hypothesis. 

Let $X - X' - \ldots - Y' - Y$ be a route in $G_t$ st the nodes in $X' - \ldots - Y'$ are in $L$.\footnote{Note that maybe $X'=Y'$.} Such a route must exist for line 3 of Fbsplit to add an edge $X - Y$. Note that $X$ and $X'$ are adjacent in $G_{\alpha}$ since $I(G_{\alpha}) \subseteq I(G_t)$. In fact, $X \rightarrow X'$ is in $G_{\alpha}$. To see it, recall that Method B3 is currently considering the block $C$ of $\alpha$, and that it has previously considered all the blocks of $\alpha$ to the right of $C$ in $\alpha$. Then, $K$ only contains nodes from $C$ or from blocks to the left of $C$ in $\alpha$. However, $X \notin C$ because $X \in K \setminus L$ and $L=K \cap C$. Then, $X$ is to the left of $C$ in $\alpha$. Thus, $X \rightarrow X'$ is in $G_{\alpha}$ because $X' \in L \subseteq C$. Likewise, $Y \rightarrow Y'$ is in $G_{\alpha}$. Note also that $X' - \ldots - Y'$ is in $G_{\alpha}$. To see it, note that the adjacencies in $X' - \ldots - Y'$ are preserved in $G_{\alpha}$ since $I(G_{\alpha}) \subseteq I(G_t)$. Moreover, these adjacencies correspond to undirected edges in $G_{\alpha}$, because the nodes in $X' - \ldots - Y'$ are in $L$ and thus in the same block of $G_{\alpha}$, since $L \subseteq C$. Furthermore, a node in $X' - \ldots - Y'$ is in $S$ because, otherwise, $\rho^{AX}_{t+1} \cup X - X' - \ldots - Y' - Y \cup \rho^{YB}_{t+1}$ would be an $S$-active route between $A$ and $B$ in $G_{t+1}$ that would violate the minimality property of $\rho^{AB}_{t+1}$. Then, $\rho^{AX}_{\alpha} \cup X \rightarrow X' - \ldots - Y' \leftarrow Y \cup \rho^{YB}_{\alpha}$ is an $S$-active route between $A$ and $B$ in $G_{\alpha}$.

\item[Case 2] Method B3 constructs $G_{t+1}$ from $G_t$ by adding an edge $X \rightarrow Y$ due to line 4 of Fbsplit. Note that $X \rightarrow Y$ occurs in $\rho^{AB}_{t+1}$.\footnote{Note that maybe $A=X$ and/or $Y=B$.} Assume that $X \rightarrow Y$ occurs as a collider edge in $\rho^{AB}_{t+1}$, i.e. $X \rightarrow Y$ occurs in a subroute of $\rho^{AB}_{t+1}$ of the form $X \rightarrow Y - \ldots - Z \leftarrow W$.\footnote{Note that maybe $Y=Z$ and/or $W=X$.} Note that a node in $Y - \ldots - Z$ is in $S$ because, otherwise, $\rho^{AB}_{t+1}$ would not be $S$-active in $G_{t+1}$. Let $X \rightarrow X' - \ldots - Y' - Y$ be a route in $G_t$ st the nodes in $X' - \ldots - Y'$ are in $L$.\footnote{Note that maybe $X'=Y'$.} Such a route must exist for line 4 of Fbsplit to add an edge $X \rightarrow Y$. Then, we can replace $X \rightarrow Y - \ldots - Z \leftarrow W$ with $X \rightarrow X' - \ldots - Y' - Y - \ldots - Z \leftarrow W$ in $\rho^{AB}_{t+1}$, and thus construct an $S$-active route between $A$ and $B$ in $G_{t+1}$ that violates the minimality property of $\rho^{AB}_{t+1}$. Since this is a contradiction, $X \rightarrow Y$ never occurs as a collider edge in $\rho^{AB}_{t+1}$. Let $\rho^{AB}_{t+1} = \rho^{AX}_{t+1} \cup X \rightarrow Y \cup \rho^{YB}_{t+1}$. Note that $X, Y \notin S$ because, otherwise, $\rho^{AB}_{t+1}$ would not be $S$-active in $G_{t+1}$. For the same reason, $\rho^{AX}_{t+1}$ and $\rho^{YB}_{t+1}$ are $S$-active in $G_{t+1}$. Then, there are $S$-active routes $\rho^{AX}_{\alpha}$ and $\rho^{YB}_{\alpha}$ between $A$ and $X$ and between $Y$ and $B$ in $G_{\alpha}$ by the induction hypothesis.

Let $X \rightarrow X' - \ldots - Y' - Y$ denote a route in $G_t$ st the nodes in $X' - \ldots - Y'$ are in $L$.\footnote{Note that maybe $X'=Y'$.} Such a route must exist for line 4 of Fbsplit to add an edge $X \rightarrow Y$. Note that $X' - \ldots - Y'$ is in $G_{\alpha}$. To see it, note that the adjacencies in $X' - \ldots - Y'$ are preserved in $G_{\alpha}$ since $I(G_{\alpha}) \subseteq I(G_t)$. Moreover, these adjacencies correspond to undirected edges in $G_{\alpha}$, because the nodes in $X' - \ldots - Y'$ are in $L$ and thus in the same block of $G_{\alpha}$, since $L \subseteq C$. Furthermore, a node in $X' - \ldots - Y'$ is in $S$ because, otherwise, $\rho^{AX}_{t+1} \cup X \rightarrow X' - \ldots - Y' - Y \cup \rho^{YB}_{t+1}$ would be an $S$-active route between $A$ and $B$ in $G_{t+1}$ that would violate the minimality property of $\rho^{AB}_{t+1}$. Moreover, note that $X$ and $X'$ are adjacent in $G_{\alpha}$ since $I(G_{\alpha}) \subseteq I(G_t)$. In fact, $X \rightarrow X'$ is in $G_{\alpha}$. To see it, recall that Method B3 is currently considering the block $C$ of $\alpha$, and that it has previously considered all the blocks of $\alpha$ to the right of $C$ in $\alpha$. Then, no block to the left of $K$ in $\beta$ has a node from $C$ or from a block to the right of $C$ in $\alpha$. Note that $X$ is to the left of $K$ in $\beta$, because $\beta$ is consistent with $G_t$. Thus, $X \rightarrow X'$ is in $G_{\alpha}$ since $X' \in L \subseteq C$. Likewise, note that $Y'$ and $Y$ are adjacent in $G_{\alpha}$ since $I(G_{\alpha}) \subseteq I(G_t)$. In fact, $Y' \leftarrow Y$ is in $G_{\alpha}$. To see it, note that $K$ only contains nodes from $C$ or from blocks to the left of $C$ in $\alpha$. However, $Y \notin C$ because $Y \in K \setminus L$ and $L=K \cap C$. Then, $Y$ is to the left of $C$ in $\alpha$. Thus, $Y' \leftarrow Y$ is in $G_{\alpha}$ because $Y' \in L \subseteq C$. Then, $\rho^{AX}_{\alpha} \cup X \rightarrow X' - \ldots - Y' \leftarrow Y \cup \rho^{YB}_{\alpha}$ is an $S$-active route between $A$ and $B$ in $G_{\alpha}$.

\item[Case 3] Method B3 constructs $G_{t+1}$ from $G_t$ by adding an edge $X - Y$ due to line 3 of Fbmerge. Note that $X - Y$ occurs in $\rho^{AB}_{t+1}$. We consider two cases.

\begin{description}

\item[Case 3.1] Assume that $X - Y$ occurs in a collider section of $\rho^{AB}_{t+1}$. Let $\rho^{AB}_{t+1} = \rho^{AZ}_{t+1} \cup Z \rightarrow X' - \ldots - X - Y - \ldots - Y' \leftarrow W \cup \rho^{WB}_{t+1}$.\footnote{Note that maybe $A=Z$, $X'=X$, $Y'=Y$, $W=Z$ and/or $W=B$.} Note that $Z, W \notin S$ because, otherwise, $\rho^{AB}_{t+1}$ would not be $S$-active in $G_{t+1}$. For the same reason, $\rho^{AZ}_{t+1}$ and $\rho^{WB}_{t+1}$ are $S$-active in $G_{t+1}$. Then, there are $S$-active routes $\rho^{AZ}_{\alpha}$ and $\rho^{WB}_{\alpha}$ between $A$ and $Z$ and between $W$ and $B$ in $G_{\alpha}$ by the induction hypothesis.

Let $R_i$ denote the component of $G_t$ in $R$ that Fbmerge is processing when the edge $X - Y$ gets added. Recall that Method B3 is currently considering the block $C$ of $\alpha$, and that it has previously considered all the blocks of $\alpha$ to the right of $C$ in $\alpha$. Then, $R_i$ only contains nodes from $C$ or from blocks to the left of $C$ in $\alpha$. In other words, $R_i \subseteq \cup_{j=1}^{k^*} C_j \setminus \{X, Y\}$ where $C_{k^*} = C$ (recall that $X, Y \in L \subseteq C$). Therefore, $X \nci_{G_t} Y | \cup_{j=1}^{k^*} C_j \setminus \{X, Y\}$ because $X$ and $Y$ must be in $Pa_{G_t}(R_i)$ for line 3 of Fbmerge to add an edge $X - Y$. Then, $X$ and $Y$ are adjacent in $G_{\alpha}$ because, otherwise, $X \ci_{G_{\alpha}} Y | \cup_{j=1}^{k^*} C_j \setminus \{X, Y\}$ which would contradict that $I(G_{\alpha}) \subseteq I(G_t)$. In fact, $X - Y$ is in $G_{\alpha}$ because $X$ and $Y$ are in the same block of $\alpha$, since $X, Y \in L \subseteq C$.

Note that $X' - \ldots - X$ and $Y - \ldots - Y'$ are in $G_{\alpha}$. To see it, note that the adjacencies in $X' - \ldots - X$ and $Y - \ldots - Y'$ are preserved in $G_{\alpha}$ since $I(G_{\alpha}) \subseteq I(G_t)$. Moreover, these adjacencies correspond to undirected edges in $G_{\alpha}$, because the nodes in $X' - \ldots - X$ and $Y - \ldots - Y'$ are in $L$ since $X, Y \in L$ and, thus, they are in the same block of $G_{\alpha}$ since $L \subseteq C$. Then, $X' - \ldots - X - Y - \ldots - Y'$ is in $G_{\alpha}$. Furthermore, a node in $X' - \ldots - X - Y - \ldots - Y'$ is in $S$ because, otherwise, $\rho^{AB}_{t+1}$ would not be $S$-active in $G_{t+1}$. Note also that $Z$ and $X'$ are adjacent in $G_{\alpha}$ since $I(G_{\alpha}) \subseteq I(G_t)$. In fact, $Z \rightarrow X'$ is in $G_{\alpha}$. To see it, recall that Method B3 is currently considering the block $C$ of $\alpha$, and that it has previously considered all the blocks of $\alpha$ to the right of $C$ in $\alpha$. Then, no block to the left of $L$ in $\beta$ has a node from $C$ or from a block to the right of $C$ in $\alpha$. Note that $Z$ is to the left of $L$ in $\beta$, because $\beta$ is consistent with $G_t$. Thus, $Z \rightarrow X'$ is in $G_{\alpha}$ since $X' \in L \subseteq C$. Likewise, $Y' \leftarrow W$ is in $G_{\alpha}$. Then, $\rho^{AZ}_{\alpha} \cup Z \rightarrow X' - \ldots - X - Y - \ldots - Y' \leftarrow W \cup \rho^{WB}_{\alpha}$ is an $S$-active route between $A$ and $B$ in $G_{\alpha}$.

\item[Case 3.2] Assume that $X - Y$ occurs in a non-collider section of $\rho^{AB}_{t+1}$. Note that this implies that $G_t$ has a descending route from $X$ to $A$ or to a node in $S$, or from $Y$ to $B$ or to a node in $S$. Assume without loss of generality that $G_t$ has a descending route from $Y$ to $B$ or to a node in $S$.

Let $R_i$ denote the component of $G_t$ in $R$ that Fbmerge is processing when the edge $X - Y$ gets added. Let $L_Y$ denote the component of $G_t$ that contains the node $Y$. Let $D$ denote the component of $G_{\alpha}$ that is maximal in $G_{\alpha}$ from the set of components of $G_{\alpha}$ that contain a descendant of $L_Y$ in $G_t$. Recall that $D$ is guaranteed to be unique by Lemma \ref{lem:maximal}, because $I(G_{\alpha}) \subseteq I(G_t)$. We now show that some $d \in D$ is a descendant of $R_i$ in $G_t$. We consider four cases.

\begin{description}

\item[Case 3.2.1] Assume that $D \cap L_Y \neq \emptyset$. It suffices to consider any $d \in R_i$. To see it, recall that Method B3 is currently considering the block $C$ of $\alpha$, and that it has previously considered all the blocks of $\alpha$ to the right of $C$ in $\alpha$. Then, $R_i$ only contains nodes from $C$ or from blocks to the left of $C$ in $\alpha$. Thus, $d$ is not to the right of the nodes of $D \cap L_Y$ in $\alpha$, since $L_Y \subseteq L \subseteq C$. Moreover, $d$ is not to the left of the nodes of $D \cap L_Y$ in $\alpha$ because, otherwise, there would be a contradiction with the definition of $D$. Then, $d \in D$.

\item[Case 3.2.2] Assume that $D \cap L_Y = \emptyset$ and $D \cap R_i \neq \emptyset$. It suffices to consider any $d \in D \cap R_i$.

\item[Case 3.2.3] Assume that $D \cap L_Y = \emptyset$, $D \cap R_i = \emptyset$, and some $d \in D$ was a descendant of some $r \in R_i$ in $G_0$. Recall that Method B3 is currently considering the block $C$ of $\alpha$, and that it has previously considered all the blocks of $\alpha$ to the right of $C$ in $\alpha$. Then, $R_i$ only contains nodes from $C$ or from blocks to the left of $C$ in $\alpha$. Then, $r$ was not in the blocks of $\alpha$ previously considered, since $r \in R_i$. Therefore, no descendant of $r$ in $G_0$ is currently to the left of $r$ in $\beta$ and, thus, the descendants of $r$ in $G_0$ are descendant of $r$ in $G_t$ by Lemma \ref{lem:descendants}, because $I(G_t) \subseteq I(G_0)$ and $\beta$ is consistent with $G_t$. Then, $d$ is a descendant of $r$ and thus of $R_i$ in $G_t$.

\item[Case 3.2.4] Assume that $D \cap L_Y = \emptyset$, $D \cap R_i = \emptyset$, and no node of $D$ was a descendant of a node of $R_i$ in $G_0$. As shown in Case 3.2.3, the descendants of any node $r \in R_i$ in $G_0$ are descendant of $r$ in $G_t$ too. Therefore, no descendant of $r$ in $G_0$ was to the left of the nodes of $D$ in $\alpha$ because, otherwise, a descendant of $r$ and thus of $L_Y$ in $G_t$ would be to the left of the nodes of $D$ in $\alpha$, which would contradict the definition of $D$. Recall that no descendant of $r$ in $G_0$ was in $D$ either. Note also that the nodes of $D$ are to the left of the nodes of $R_i$ in $\alpha$, by definition of $D$ and the fact that $D \cap R_i = \emptyset$. These observations have two consequences. First, the components of $G$ containing a node from $D$ were still in $H$ when any component of $G$ containing a node from $R_i$ became a terminal component of $H$ in Construct $\beta$. Thus, Construct $\beta$ added the components of $G$ containing a node from $D$ to $\beta$ after having added the components of $G$ containing a node from $R_i$. Second, Construct $\beta$ did not interchange in $\beta$ any component of $G$ containing a node from $D$ with any component of $G$ containing a node from $R_i$. 

Recall that Method B3 is currently considering the block $C$ of $\alpha$, and that it has previously considered all the blocks of $\alpha$ to the right of $C$ in $\alpha$. Note that the nodes of $D$ were not in the blocks of $\alpha$ previously considered because, otherwise, $C$ and thus the nodes of $L_Y$ (recall that $L_Y \subseteq L \subseteq C$) would be to the left of $D$ in $\alpha$, which would contradict the definition of $D$. Therefore, the nodes of $D$ are currently still to the left of $R_i$ in $\beta$. Note that the only component to the left of $R_i$ in $\beta$ that contains a descendant of $L_Y$ in $G_t$ is precisely $L_Y$, because $L$ is the left neighbor of $R$ in $\beta$, $L_Y \subseteq L$, and $\beta$ is consistent with $G_t$. However, $D \cap L_Y = \emptyset$. Thus, $D$ contains no descendant of $L_Y$ in $G_t$, which contradicts the definition of $D$. Thus, this case never occurs.

\end{description}

We continue with the proof of Case 3.2. Let $\rho^{AB}_{t+1} = \rho^{AX}_{t+1} \cup X - Y \cup \rho^{YB}_{t+1}$.\footnote{Note that maybe $A=X$ and/or $Y=B$.} Note that $X, Y \notin S$ because, otherwise, $\rho^{AB}_{t+1}$ would not be $S$-active in $G_{t+1}$. For the same reason, $\rho^{AX}_{t+1}$ and $\rho^{YB}_{t+1}$ are $S$-active in $G_{t+1}$. Note that $X$ and $Y$ must be in $Pa_{G_t}(R_i)$ for line 3 of Fbmerge to add an edge $X - Y$. Then, no descendant of $R_i$ in $G_t$ is in $S$ because, otherwise, there would be an $S$-active route $\rho^{XY}_t$ between $X$ and $Y$ in $G_t$ and, thus, $\rho^{AX}_{t+1} \cup \rho^{XY}_t \cup \rho^{YB}_{t+1}$ would be an $S$-active route between $A$ and $B$ in $G_{t+1}$ that would violate the minimality property of $\rho^{AB}_{t+1}$. Then, there is an $S$-active descending route $\rho^{rd}_t$ from some $r \in R_i$ to some $d \in D$ in $G_t$ because, as shown, $D$ contains a descendant of $R_i$ in $G_t$. Then, $\rho^{AX}_{t+1} \cup X \rightarrow X' - \ldots - r \cup \rho^{rd}_t$ is an $S$-active route between $A$ and $d$ in $G_{t+1}$.\footnote{Note that maybe $X'=r$.} Likewise, $\rho^{BY}_{t+1} \cup Y \rightarrow Y' - \ldots - r \cup \rho^{rd}_t$ is an $S$-active route between $B$ and $d$ in $G_{t+1}$, where $\rho^{BY}_{t+1}$ denotes the route resulting from reversing $\rho^{YB}_{t+1}$.\footnote{Note that maybe $Y'=r$.} Therefore, there are $S$-active routes $\rho^{Ad}_{\alpha}$ and $\rho^{Bd}_{\alpha}$ between $A$ and $d$ and between $B$ and $d$ in $G_{\alpha}$ by the induction hypothesis. 

Recall that we assumed without loss of generality that $G_t$ has a descending route from $Y$ to a node $E$ st $E=B$ or $E \in S$. Note that $E$ is a descendant of $L_Y$ in $G_t$ and, thus, $E$ is a descendant of $d$ in $G_{\alpha}$ by definition of $D$ and the fact that $d \in D$. Let $\rho^{dE}_{\alpha}$ denote the descending route from $d$ to $E$ in $G_{\alpha}$. Assume without loss of generality that $G_{\alpha}$ has no descending route from $d$ to $B$ or to a node of $S$ that is shorter than $\rho^{dE}_{\alpha}$. We now consider two cases.

\begin{description}

\item[Case 3.2.5] Assume that $E=B$. Note that $\rho^{dE}_{\alpha}$ is $S$-active in $G_{\alpha}$ by definition and the fact that $d \notin S$. To see the latter, recall that no descendant of $R_i$ in $G_t$ (among which is $d$) is in $S$. Thus, $\rho^{Ad}_{\alpha} \cup \rho^{dE}_{\alpha}$ is an $S$-active route between $A$ and $B$ in $G_{\alpha}$.

\item[Case 3.2.6] Assume that $E \in S$. Let $\rho^{dB}_{\alpha}$ and $\rho^{Ed}_{\alpha}$ denote the routes resulting from reversing $\rho^{Bd}_{\alpha}$ and $\rho^{dE}_{\alpha}$. Consider the route $\rho^{Ad}_{\alpha} \cup \rho^{dB}_{\alpha}$ between $A$ and $B$ in $G_{\alpha}$. If this route is $S$-active, then we are done. If it is not $S$-active in $G_{\alpha}$, then $d$ occurs in a collider section of $\rho^{Ad}_{\alpha} \cup \rho^{dB}_{\alpha}$ that has no node in $S$. Then, we can replace each such occurence of $d$ with $\rho^{dE}_{\alpha} \cup \rho^{Ed}_{\alpha}$ and, thus construct an $S$-active route between $A$ and $B$ in $G_{\alpha}$.

\end{description}

\end{description}

\item[Case 4] Method B3 constructs $G_{t+1}$ from $G_t$ by adding an edge $X \rightarrow Y$ due to line 4 of Fbmerge. Note that $X \rightarrow Y$ occurs in $\rho^{AB}_{t+1}$. We consider two cases.

\begin{description}

\item[Case 4.1] Assume that $X \rightarrow Y$ occurs as a collider edge in $\rho^{AB}_{t+1}$. Let $\rho^{AB}_{t+1} = \rho^{AX}_{t+1} \cup X \rightarrow Y \cup \rho^{YB}_{t+1}$.\footnote{Note that maybe $A=X$ and/or $Y=B$.} Note that $X \notin S$ because, otherwise, $\rho^{AB}_{t+1}$ would not be $S$-active in $G_{t+1}$. For the same reason, $\rho^{AX}_{t+1}$ is $S$-active in $G_{t+1}$. Then, there is an $S$-active route $\rho^{AX}_{\alpha}$ between $A$ and $X$ in $G_{\alpha}$ by the induction hypothesis. 

Let $R_i$ denote the component of $G_t$ in $R$ that Fbmerge is processing when the edge $X \rightarrow Y$ gets added. Recall that Method B3 is currently considering the block $C$ of $\alpha$, and that it has previously considered all the blocks of $\alpha$ to the right of $C$ in $\alpha$. Then, $R_i$ only contains nodes from $C$ or from blocks to the left of $C$ in $\alpha$. In other words, $R_i \subseteq \cup_{j=1}^{k^*} C_j \setminus \{X, Y\}$ where $k^*$ is the smallest $k$ st $X, Y \in \cup_{j=1}^{k} C_j$ (recall that $Y \in L \subseteq C$). Therefore, $X \nci_{G_t} Y | \cup_{j=1}^{k^*} C_j \setminus \{X, Y\}$ because $X$ and $Y$ must be in $Pa_{G_t}(R_i)$ for line 4 of Fbmerge to add an edge $X \rightarrow Y$. Then, $X$ and $Y$ are adjacent in $G_{\alpha}$ because, otherwise, $X \ci_{G_{\alpha}} Y | \cup_{j=1}^{k^*} C_j \setminus \{X, Y\}$ which would contradict that $I(G_{\alpha}) \subseteq I(G_t)$. In fact, $X \rightarrow Y$ is in $G_{\alpha}$. To see it, recall that Method B3 is currently considering the block $C$ of $\alpha$, and that it has previously considered all the blocks of $\alpha$ to the right of $C$ in $\alpha$. Then, no block to the left of $L$ in $\beta$ has a node from $C$ or from a block to the right of $C$ in $\alpha$. Note that $X$ is to the left of $R$ in $\beta$, because $\beta$ is consistent with $G_t$. Then, $X$ is to the left of $L$ in $\beta$, because $L$ is the left neighbor of $R$ in $\beta$ and $X \notin L$. Thus, $X \rightarrow Y$ is in $G_{\alpha}$ because $Y \in L \subseteq C$. We now consider two cases.

\begin{description}

\item[Case 4.1.1] Assume that $\rho^{YB}_{t+1} = Y - \ldots - Y \leftarrow X \cup \rho^{XB}_{t+1}$. Note that a node in $Y - \ldots - Y$ is in $S$ because, otherwise, $\rho^{AB}_{t+1}$ would not be $S$-active in $G_{t+1}$. For the same reason, $\rho^{XB}_{t+1}$ is $S$-active in $G_{t+1}$. Then, there is an $S$-active route $\rho^{XB}_{\alpha}$ between $X$ and $B$ in $G_{\alpha}$ by the induction hypothesis. Note that $Y - \ldots - Y$ is in $G_{\alpha}$. To see it, note that the adjacencies in $Y - \ldots - Y$ are preserved in $G_{\alpha}$ since $I(G_{\alpha}) \subseteq I(G_t)$. Moreover, these adjacencies correspond to undirected edges in $G_{\alpha}$, because the nodes in $Y - \ldots - Y$ are in $L$ since $Y \in L$ and, thus, they are in the same block of $G_{\alpha}$ since $L \subseteq C$. Then, $\rho^{AX}_{\alpha} \cup X \rightarrow Y - \ldots - Y \leftarrow X \cup \rho^{XB}_{\alpha}$ is an $S$-active route between $A$ and $B$ in $G_{\alpha}$.

\item[Case 4.1.2] Assume that $\rho^{YB}_{t+1} = Y - \ldots - Z \leftarrow W \cup \rho^{WB}_{t+1}$.\footnote{Note that maybe $Y=Z$, $W=X$ and/or $W=B$. Note that $Y \neq Z$ or $W \neq X$, because the case where $Y=Z$ and $W=X$ is covered by Case 4.1.1.} Note that $W \notin S$ and a node in $Y - \ldots - Z$ is in $S$ because, otherwise, $\rho^{AB}_{t+1}$ would not be $S$-active in $G_{t+1}$. For the same reason, $\rho^{WB}_{t+1}$ is $S$-active in $G_{t+1}$. Then, there is an $S$-active route $\rho^{WB}_{\alpha}$ between $W$ and $B$ in $G_{\alpha}$ by the induction hypothesis. Note that $Y - \ldots - Z$ is in $G_{\alpha}$. To see it, note that the adjacencies in $Y - \ldots - Z$ are preserved in $G_{\alpha}$ since $I(G_{\alpha}) \subseteq I(G_t)$. Moreover, these adjacencies correspond to undirected edges in $G_{\alpha}$, because the nodes in $Y - \ldots - Z$ are in $L$ since $Y \in L$ and, thus, they are in the same block of $G_{\alpha}$ since $L \subseteq C$. Moreover, note that $Z$ and $W$ are adjacent in $G_{\alpha}$ since $I(G_{\alpha}) \subseteq I(G_t)$. In fact, $Z \leftarrow W$ is in $G_{\alpha}$. To see it, recall that no block to the left of $L$ in $\beta$ has a node from $C$ or from a block to the right of $C$ in $\alpha$. Note that $W$ is to the left of $L$ in $\beta$, because $\beta$ is consistent with $G_t$. Thus, $Z \leftarrow W$ is in $G_{\alpha}$ since $Z \in L \subseteq C$. Then, $\rho^{AX}_{\alpha} \cup X \rightarrow Y - \ldots - Z \leftarrow W \cup \rho^{WB}_{\alpha}$ is an $S$-active route between $A$ and $B$ in $G_{\alpha}$.

\end{description}

\item[Case 4.2] Assume that $X \rightarrow Y$ occurs as a non-collider edge in $\rho^{AB}_{t+1}$. The proof of this case is the same as that of Case 3.2, with the only exception that $X - Y$ should be replaced by $X \rightarrow Y$.

\end{description}

\end{description}

\end{proof}

\begin{figure}[t]
\centering
\small
\begin{tabular}{rl}
\hline
\\
& \underline{Method G2H($G$, $H$)}\\
\\
& /* Given two CGs $G$ and $H$ st $I(H) \subseteq I(G)$, the algorithm transforms $G$ into $H$\\ 
& by a sequence of directed and undirected edge additions and feasible splits and\\
& mergings st after each operation in the sequence $G$ is a CG and $I(H) \subseteq I(G)$ */\\
\\
1 & Let $\alpha$ denote a chain that is consistent with $H$\\
2 & Method B3($G$, $\alpha$)\\
3 & Add to $G$ the edges that are in $H$ but not in $G$\\
\\
\hline
\\
\end{tabular}
\caption{Method G2H.}\label{fig:methodg2h}
\end{figure}

We are now ready to prove the main result of this paper, namely that the extension of Meek's conjecture to CGs is true. The proof is constructive in the sense that we give an algorithm that constructs a valid sequence of operations. The pseudocode of our algorithm, called Method G2H, can be seen in Figure \ref{fig:methodg2h}. The following theorem proves that Method G2H is correct.

\begin{thm}
Given two CGs $G$ and $H$ st $I(H) \subseteq I(G)$, Method G2H($G$, $H$) transforms $G$ into $H$ by a sequence of directed and undirected edge additions and feasible splits and mergings st after each operation in the sequence $G$ is a CG and $I(H) \subseteq I(G)$.
\end{thm}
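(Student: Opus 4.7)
The plan is to combine Lemma \ref{lem:correctness} with an edge-containment argument. By Lemma \ref{lem:correctness}, line~2 of Method G2H transforms $G = G_0$ into $G_\alpha = G_m$ via a sequence $G_0, G_1, \ldots, G_m$ of CGs in which each transition is either a feasible split, a feasible merging, or a directed/undirected edge addition performed inside Fbsplit or Fbmerge. Each such transition preserves CG structure---feasible splits and mergings by definition, and the edge additions because they respect the chain $\beta$ that is maintained to be consistent with the current graph (as verified in Cases~1--3 of the proof of Lemma \ref{lem:correctness}). Each transition also satisfies $I(G_{t+1}) \subseteq I(G_t)$: feasible splits and mergings preserve the induced independence model by the result of Studen\'y et al.\ cited in Section~\ref{sec:preliminaries}, while for an edge addition every $Z$-active route in $G_t$ is literally a $Z$-active route in $G_{t+1}$, since the collider/non-collider status of a route depends only on the edges that lie inside the route itself. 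Consequently $I(G_\alpha) = I(G_m) \subseteq I(G_t)$ for every $t \leq m$, and to guarantee $I(H) \subseteq I(G_t)$ throughout line~2 it suffices to show $I(H) \subseteq I(G_\alpha)$.

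The crux is thus to prove $I(H) \subseteq I(G_\alpha)$. I will derive this from the stronger statement that every edge of $G_\alpha$ occurs with the same orientation in $H$; the ``routes persist'' observation from the previous paragraph then yields $I(H) \subseteq I(G_\alpha)$ at once. Fix first an undirected edge $X - Y$ of $G_\alpha$, with $X, Y$ in the same block $C_k$ of $\alpha$, and suppose for contradiction that $X$ and $Y$ are non-adjacent in $H$ (no directed edge between them is possible, since $H$ is consistent with $\alpha$). The pairwise block-recursive Markov property of $H$ with respect to $I(H)$---a standard property of any CG consistent with a chain---gives $X \ci_H Y | \cup_{j=1}^k C_j \setminus \{X, Y\}$, hence $X \ci_{I(G)} Y | \cup_{j=1}^k C_j \setminus \{X, Y\}$ since $I(H) \subseteq I(G)$. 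On the other hand, Lemma \ref{lem:unique} furnishes $X \ci_{I(G)} \cup_{j=1}^k C_j \setminus \{X\} \setminus Bd_{G_\alpha}(X) | Bd_{G_\alpha}(X)$ together with $Y \in Bd_{G_\alpha}(X)$. Applying the intersection axiom of the graphoid $I(G)$ to these two statements (with $Z = Bd_{G_\alpha}(X) \setminus \{Y\}$ and $W = \cup_{j=1}^k C_j \setminus \{X\} \setminus Bd_{G_\alpha}(X)$) yields $X \ci_{I(G)} \cup_{j=1}^k C_j \setminus \{X\} \setminus (Bd_{G_\alpha}(X) \setminus \{Y\}) | Bd_{G_\alpha}(X) \setminus \{Y\}$, contradicting the minimality characterization of $Bd_{G_\alpha}(X)$ in Lemma \ref{lem:unique}. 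The case of a directed edge $X \rightarrow Y$ of $G_\alpha$ (with $X \in C_j, Y \in C_k, j < k$) is symmetric, applied to $Y$'s boundary; the edge, if present in $H$ at all, is forced to be $X \rightarrow Y$ by consistency with $\alpha$.

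Line~3 of Method G2H then adds, one at a time, the edges of $H$ that are missing from $G_\alpha$. Each such edge is consistent with $\alpha$, so the intermediate graph remains a CG consistent with $\alpha$; the edge set of the current $G$ stays sandwiched between the edge sets of $G_\alpha$ and $H$ (with matching orientations), so the same route-persistence argument gives $I(H) \subseteq I(G)$ after each addition, and when no edges remain to add we have $G = H$. The main obstacle I expect is the edge-inclusion $G_\alpha \subseteq H$ of the middle paragraph, which requires both the pairwise block-recursive property of $H$ with respect to $I(H)$ and the careful bookkeeping of applying intersection inside the graphoid $I(G)$ to contradict the minimality in Lemma \ref{lem:unique}; everything else is routine given Lemma \ref{lem:correctness}.
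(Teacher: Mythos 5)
Your proof is correct and its overall skeleton coincides with the paper's: reduce line~2 to Lemma~\ref{lem:correctness}, establish that $G_{\alpha}$ is a subgraph of $H$ (with matching orientations) so that $I(H) \subseteq I(G_{\alpha})$ follows by route persistence, and handle line~3 by noting that each intermediate graph is sandwiched between $G_{\alpha}$ and $H$. The one place where you genuinely diverge is the edge-inclusion $G_{\alpha} \subseteq H$. The paper gets it in two lines from uniqueness: since $I(H) \subseteq I(G)$ and $H$ is consistent with $\alpha$, greedily deleting edges from $H$ while it remains an I map of $I(G)$ terminates at a MI map of $I(G)$ relative to $\alpha$ that is a subgraph of $H$, and by Lemma~\ref{lem:unique} that MI map can only be $G_{\alpha}$. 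You instead verify the inclusion edge by edge: you take the pairwise block-recursive statement $X \ci_H Y \mid \cup_{j=1}^{k^*} C_j \setminus \{X,Y\}$ for a hypothetical non-adjacent pair, push it into $I(G)$, and use the intersection axiom against the minimality characterization of $Bd_{G_{\alpha}}(X)$ from Lemma~\ref{lem:unique}; your bookkeeping there (the choices of $Z$ and $W$) is correct, and the orientation of the surviving edge is indeed forced by consistency with $\alpha$. The trade-off is that your route needs the converse direction of the Markov-property equivalence --- that a CG satisfies the pairwise block-recursive property wrt its \emph{own} induced independence model --- which is standard but is not among the facts the paper explicitly imports (it only cites the direction pairwise~$+$~graphoid~$\Rightarrow$~I map), whereas the paper's pruning argument uses nothing beyond Lemma~\ref{lem:unique} itself. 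Both arguments are sound; the paper's is shorter, yours makes the mechanism behind the inclusion more explicit.
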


\begin{proof}

Note from line 1 that $\alpha$ denotes a chain that is consistent with $H$. Let $G_{\alpha}$ denote the MI map of $I(G)$ relative to $\alpha$. Recall that $G_{\alpha}$ is guaranteed to be unique by Lemma \ref{lem:unique}, because $I(G)$ is a graphoid. Note that $I(H) \subseteq I(G)$ implies that $G_{\alpha}$ is a subgraph of $H$. To see it, note that $I(H) \subseteq I(G)$ implies that we can obtain a MI map of $I(G)$ relative to $\alpha$ by just removing edges from $H$. However, $G_{\alpha}$ is the only MI map of $I(G)$ relative to $\alpha$.

Then, it follows from the proof of Lemma \ref{lem:correctness} that line 2 transforms $G$ into $G_{\alpha}$ by a sequence of directed and undirected edge additions and feasible splits and mergings, and that after each operation in the sequence $G$ is a CG and $I(G_{\alpha}) \subseteq I(G)$. Thus, after each operation in the sequence $I(H) \subseteq I(G)$ because $I(H) \subseteq I(G_{\alpha})$ since, as shown, $G_{\alpha}$ is a subgraph of $H$. Finally, line 3 transforms $G$ from $G_{\alpha}$ to $H$ by a sequence of edge additions. Of course, after each edge addition $G$ is a CG and $I(H) \subseteq I(G)$ because $G_{\alpha}$ is a subgraph of $H$.

\end{proof}

\section*{Acknowledgments}

We thank Dr. Jens D. Nielsen and Dag Sonntag for proof-reading this manuscript. This work is funded by the Center for Industrial Information Technology (CENIIT) and a so-called career contract at Link\"oping University, and by the Swedish Research Council (ref. 2010-4808).


\begin{thebibliography}{9}

\bibitem[Chickering, 2002]{Chickering2002}
Chickering, D. M. Optimal Structure Identification with Greedy Search. {\em Journal of Machine Learning Research}, 3:507-554, 2002.

\bibitem[Lauritzen, 1996]{Lauritzen1996}
Lauritzen, S. L. {\em Graphical Models}. Oxford University Press, 1996.

\bibitem[Meek, 1997]{Meek1997}
Meek, C. {\em Graphical Models: Selecting Causal and Statistical Models}. PhD thesis, Carnegie Mellon Unversity, 1997.

\bibitem[Studen\'{y}, 2005]{Studeny2005}
Studen\'{y}, M. {\em Probabilistic Conditional Independence Structures}. Springer, 2005.

\bibitem[Studen\'y and Bouckaert, 1998]{StudenyandBouckaert1998}
Studen\'y, M. and Bouckaert, R. R. On Chain Graph Models for Description of Conditional Independence Structures. {\em The Annals of Statistics}, 26:1434-1495, 1998. 

\bibitem[Studen\'y et al., 2009]{Studenyetal.2009}
Studen\'y, M., Roverato, A. and \v{S}t\v{e}p\'anov\'a, S. Two Operations of Merging and Splitting Components in a Chain Graph. {\em Kybernetika}, 45:208-248, 2009. 

\end{thebibliography}
\end{document}